\newtheorem{definition}{Definition}
\newtheorem{theorem}{Theorem}
\newtheorem{lemma}{Lemma}
\newtheorem{corollary}{Corollary}
\newcommand{\argmax}{\mathop{\rm argmax}\limits}
\newcommand{\argmin}{\mathop{\rm argmin}\limits}
\title{Interval-based Prediction Uncertainty Bound Computation in Learning with Missing Values}
\author{
Hiroyuki Hanada\footnotemark[1]\\
{\tt hanada.hiroyuki@nitech.ac.jp}
\and
Toshiyuki Takada\footnotemark[1]\\
{\tt takada.t.mllab.nit@gmail.com}
\and
Jun Sakuma\footnotemark[2] \footnotemark[3] \footnotemark[4]\\
{\tt jun@cs.tsukuba.ac.jp}
\and
Ichiro Takeuchi\footnotemark[1] \footnotemark[4] \footnotemark[5]\\
{\tt takeuchi.ichiro@nitech.ac.jp}
}
\date{1 March, 2018}
\begin{document}
\maketitle
\renewcommand\thefootnote{\fnsymbol{footnote}}
\footnotetext[1]{Department of Computer Science, Nagoya Institute of Technology, Nagoya, Aichi, Japan}
\footnotetext[2]{Department of Computer Science, University of Tsukuba, Tsukuba, Ibaraki, Japan}
\footnotetext[3]{CREST, Japan Science and Technology Agency, Kawaguchi, Saitama, Japan}
\footnotetext[4]{Center for Advanced Intelligence Project, RIKEN, Chuo, Tokyo, Japan}
\footnotetext[5]{Center for Materials Research by Information Integration, National Institute for Materials Science, Tsukuba, Ibaraki, Japan}
\setcounter{footnote}{0}
\renewcommand\thefootnote{\arabic{footnote}}

\begin{abstract} 
The problem of machine learning with missing values is common in many areas. 
A simple approach is to first construct a dataset without missing values simply by discarding instances with missing entries or by imputing a fixed value for each missing entry, and then train a prediction model with the new dataset. 
A drawback of this naive approach is that the uncertainty in the missing entries is not properly incorporated in the prediction. 
In order to evaluate prediction uncertainty, the multiple imputation (MI) approach has been studied, but the performance of MI is sensitive to the choice of the probabilistic model of the true values in the missing entries, and the computational cost of MI is high because multiple models must be trained.
In this paper, we propose an alternative approach called the \emph{Interval-based Prediction Uncertainty Bounding (IPUB)} method. 
The IPUB method represents the uncertainties due to missing entries as intervals, and efficiently computes the lower and upper bounds of the prediction results when all possible training sets constructed by imputing arbitrary values in the intervals are considered. 
The IPUB method can be applied to a wide class of convex learning algorithms including penalized least-squares regression, support vector machine (SVM), and logistic regression. 
We demonstrate the advantages of the IPUB method by comparing it with an existing method in numerical experiment with benchmark datasets.

\end{abstract}

%---------------------------------------
%%%%%%%%%%%%%%%%%%%%%%%%%%%%%%%%%%%%%%%%%%%%%%%%%%%%%%%%%%%%
\section{Introduction} \label{sec:overview}
%%%%%%%%%%%%%%%%%%%%%%%%%%%%%%%%%%%%%%%%%%%%%%%%%%%%%%%%%%%%

\begin{figure*}[t]
 \includegraphics[width=\hsize,clip]{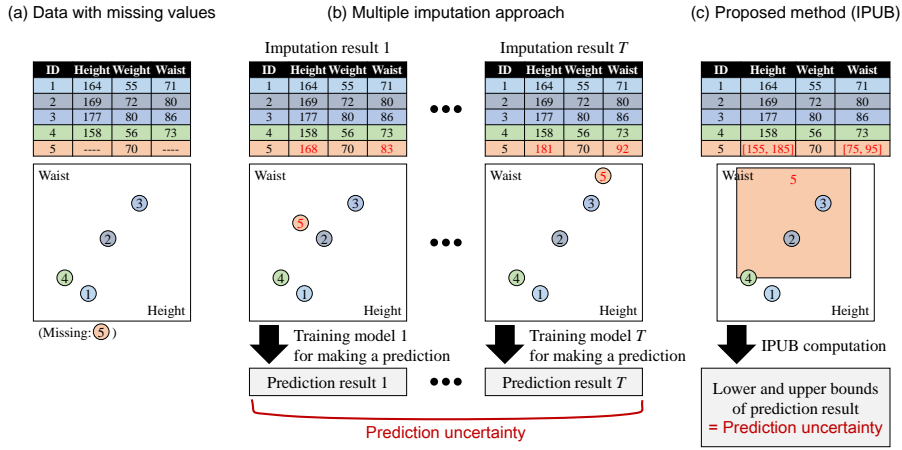}
 \caption{Schematic diagram of prediction uncertainty evaluation. (a) Given a training set with missing values, the goal is to evaluate the uncertainty in a prediction result. (b) MI constructs multiple ($T>1$) training sets by randomly imputing values based on a probabilistic model, and combines the prediction results of $T$ trained models. (c) The IPUB method represents the uncertainty in the missing entries as intervals, and estimates the prediction uncertainty by efficiently computing the lower and upper bounds of the prediction result.}
 \label{fig:Concept-Interval-Imputation}
\end{figure*}

In this paper we study the problem of learning prediction models, such as regression and classification models, from incomplete data with missing values.
This problem arises in a variety of fields ranging from traditional statistical analysis of empirical data in biomedical and social sciences to modern machine learning tasks such as natural language processing and web analytics. 
%
%Since many machine learning algorithms for prediction modeling cannot be directly applied to incomplete data with missing values, it is important to develop methods that can properly incorporate the effect of missing values. 

Many statistical and machine learning approaches for handling missing values have been developed and explored in the literature~\cite{schafer1997analysis,little2002statistical}. 
The simplest way of dealing with missing values is to discard them.
The obvious drawback of this naive approach is that the size of the training set decreases. 
Furthermore, discarding instances with missing values leads to serious biases in the prediction results unless certain simplified assumptions about the missing mechanism such as {\it the missing at random (MAR) assumption}~\cite{little2002statistical} are known to be met, which is rarely the case in practice. 
Another common approach to handling missing values is {\it single imputation}, where missing values are simply populated with specific fixed values prior to prediction modeling.
A variety of single imputation methods ranging from simple mean value imputation to advanced low-rank matrix completion~\cite{candes2009exact,candes2010power,keshavan2010matrix,cai2010singular,mazumder2010spectral} have been studied in the literature.

A principal concern with these commonly used approaches is that they cannot adequately evaluate the {\it prediction uncertainty} stemming from the uncertainty due to missing entries. 
The {\it multiple imputation (MI)} approach has been developed for evaluating prediction uncertainty~\cite{rubin1978multiple,tanner1987calculation,little1993pattern,van1999flexible,raghunathan2001multivariate,king2001analyzing,rubin2003nested,van2007multiple}.
The basic idea of MI is to introduce probabilistic models of true values in the missing entries. 
Using probabilistic models, multiple ($T > 1$) sets of values are sampled at random, and $T$ complete training sets are synthetically constructed by imputing these values to the missing entries. 
Then, $T$ prediction models are separately trained using these $T$ training sets. 
The prediction uncertainty is evaluated by combining the prediction results of these $T$ models. 
In MI, the performance of prediction uncertainty evaluations crucially depends on the choice of probabilistic models. 
Because the mechanisms of missing values are generally unknown, it is often difficult to select appropriate probabilistic models. 
Another difficulty with MI is that $T$ must be sufficiently large to adequately evaluate the prediction uncertainty. 
When the size of the training set is large, the computational cost of training multiple model can be problematic. 

In this paper, we propose a novel approach, called the {\it Interval-based Prediction Uncertainty Bounding (IPUB)} method, for evaluating the prediction uncertainty for learning prediction models such as regression or classification models with missing values.
The IPUB method represents the uncertainty in the true value of a missing entry as an interval. 
For example, if there is a missing entry for the age of a person, the IPUB method represents the uncertainty in the missing entry, e.g., as 
``{\it the missing age is in the interval $[20, 40]$}''.  
Then, for a given test input, the IPUB method also represents the uncertainty in the prediction result in the form of an interval.
For example, if the model is trained to predict the income of a person, IPUB evaluates the prediction uncertainty, e.g., as
``{\it the predicted income is in the interval [80K, 120K]}''. 

The IPUB method can be applied to a class of linear prediction models whose learning algorithm can be formulated as a certain form of penalized empirical risk minimization problem (see \eq{eq:problem-primal}) . 
This class contains many popular prediction model learning algorithms (see Table~\ref{tb:example-loss-penalty}). 
The IPUB method has two advantages. 
The first is that it does not rely on probabilistic models of the true values of the missing entries. 
To evaluate the prediction uncertainty, the IPUB method computes lower and upper bounds for the predicted value when all possible training sets constructed by imputing arbitrary values in the intervals are considered. 
%The first advantage is that the uncertainty of the true values in missing entries can be represented just as an interval and any specific probabilistic models are not needed. 
%
%Namely, the IPUB method has a property that, if the true values in the missing entries are within the specified intervals, then the result of the prediction model trained with the complete data set with these true values in the missing entries is {\it guranteed} to be within the prediction uncertainty interval obtained by the IPUB method. 
%
The second advantage is that, unlike MI methods, the IPUB method does not require multiple model training computations. 
Specifically, using the IPUB method, a gauge of prediction uncertainty in the form of an interval can be computed only with the cost of training a single prediction model and $O(M)$ additional computations, where $M$ is the number of missing entries in the training set. 
\figurename~\ref{fig:Concept-Interval-Imputation} illustrates the difference between the MI approach and the proposed IPUB method. 

% % % % % % % % % % % % % % % % % % % % % % % % % % % % % %
%\subsection{Related works and our contributions} \label{sec:related}
% % % % % % % % % % % % % % % % % % % % % % % % % % % % % %

\subsection{Related works and our contributions}
The most popular MI approach is Bayesian MI~\cite{rubin1978multiple,tanner1987calculation,little1993pattern,van1999flexible,raghunathan2001multivariate}. 
In Bayesian MI, using observed entries in a training set, probabilistic models of the true values of the missing entries in the form of $p(X_u \mid X_o)$ are constructed, where $X_u$ is the set of unobserved values in the missing entries and $X_o$ is the set of observed values in the training set. 
Bayesian MI methods proceed by generating multiple ($T > 1$) imputed training sets through randomly sampling $X_u$ from $p(X_u \mid X_o)$.
Then, using each of the $T$ training sets, $T$ prediction models are trained, and the prediction uncertainty for a test input is modeled by combining the $T$ prediction results of these $T$ models. 

The main contribution of this paper is to introduce a {\it sensitivity analysis} framework for evaluating prediction uncertainty. 
Sensitivity analysis has been used for analyzing the stability of solutions to optimization problems when some parameters are perturbed \cite{saltelli2000sensitivity,bertsekas1999nonlinear}. 
Our basic idea is to compute how the prediction model (which is the optimal solution of a convex optimization problem) changes when the true values of the missing entries change. 
We developed a novel sensitivity analysis technique for efficiently computing the interval of prediction results when the true values in the missing entries change within the intervals. 
Our sensitivity analysis framework is motivated by the recent development of convex optimization techniques for sparse modeling \cite{ghaoui2012safe,wang2013lasso,wang2013scaling,ogawa2013safe,liu2014safe,wang2014safe,xiang2014screening,fercoq2015mind,ndiaye2015gap,Zimmert2015,okumura2015quick,shibagaki2015regularization,nakagawa2016safe,shibagaki2016simultaneous,Hanada2018AAAI-Modification}.

In the numerical analysis literature, so-called \emph{interval analysis} (IA) \cite{moore2009interval} has been studied, where the main goal is to evaluate the rounding error of floating point operations.
In IA, the input and output for every operation is an interval.
Therefore, some of the existing IA methods can be used similarly to the IPUB method for prediction uncertainty evaluation.
However, the output intervals for IA tend to be much wider than required because IA computes intervals for every operation, i.e., when the number of operations is large, the final intervals are much wider than the optimal ones. 
In Section \ref{sec:experiment}, we compare the IPUB method with an extant IA method~\cite{hansen2003IA}, and demonstrate that the former can provide tighter bounds than the latter in most cases. 
In addition, the IPUB method is much faster than the IA method because interval computations must be repeated many times in the latter. 

\paragraph{Notations}
For any natural number $k$, we define $[k] := \{1, \ldots, k\}$.
%
%We denote vectors by a bold italic letter (e.g. $\bm{v}$),
%and its $i^{\mathrm{th}}$ element by a non-bold italic letter with the subscript of $i$ (e.g. $v_i$).
%
The $L_p$-norm of $\bm{v}\in\RR^k$ is denoted as $\|\bm{v}\|_p := [\sum_{j\in[k]} v_j^p ]^{1/p}$.
For two vectors $\bm v, \bm u \in \RR^k$, inequality notation such as $\bm v \le \bm u$ indicates elementwise inequalities,i.e. $\bm v \le \bm u \Leftrightarrow (\forall i \in [k]) v_i \le u_i$.
In addition, interval notation such as $[\bm v, \bm u]$ indicates a set of elementwise intervals $\{[v_i, u_i]\}_{i \in [k]}$.
We also use these notations for matrices.
For a condition $c$ and two statements $s_1$ and $s_2$, the notation $(c ~?~ s_1 : s_2)$ indicates ``if $c$ is true then $s_1$, otherwise $s_2$''.
For a convex function $\phi: \RR^k\to\RR$, $\phi^*(\bm{v}) := \sup_{\bm{u}\in\RR^k}\{ \bm{u}^\top\bm{v} - \phi(\bm{u}) \}$ is called the {\em convex conjugate} of $\phi$.

%%%%%%%%%%%%%%%%%%%%%%%%%%%%%%%%%%%%%%%%%%%%%%%%%%%%%%%%%%%%
\section{Problem Setup and Technical Preliminaries}
%%%%%%%%%%%%%%%%%%%%%%%%%%%%%%%%%%%%%%%%%%%%%%%%%%%%%%%%%%%%

In this section we configure the problem and provide technical preliminaries for convex analysis.

\subsection{Problem Setup} \label{subsec:problem_setup}

Consider learning a linear prediction model $f$ that maps a $d$-dimensional input vector $\bm x \in \cX \subseteq \RR^d$ to a scalar output $y \in \cY \subseteq \RR$ where $\cX$ and $\cY$ are the input and output domain, respectively. 
The prediction model is written as 
\begin{align*}
 y = g(\bm w^\top \bm x), 
\end{align*}
where $g: \RR \to \cY$ is a monotonically non-decreasing function that maps the linear model output in $\RR$ to the output in $\cY$, and $\bm w \in \RR^d$ is the vector of linear model parameters. 
For example, $\cY = \RR$ and $g$ is the identity function in the case of least-squares regression, $\cY = \{\pm 1\}$ and $g$ is the sign function in the case of SVM, and $\cY = [0, 1]$ and $g$ is the sigmoid function in the case of logistic regression\footnote{
In logistic regression, the domain of the training output $y_i$ is $\{\pm 1\}$ as in SVM, while the domain of the prediction results $g(\bm w^\top \bm x)$ can be considered as $[0, 1]$ when we use logistic regression model for predicting the probability of the class label being $+1$. 
}. 

Let $(X, \bm y)$ be the training set, where $X$ is an $n$-by-$d$ input matrix and $\bm y$ is an $n$-dimensional output vector. 
The $i$-th row, the $j$-th column and the $(i, j)$-th element of $X$ are respectively written as $\bm x_{i \cdot}$, $\bm x_{\cdot j}$ and $x_{ij}$, and the $i$-th element of $\bm y$ is written as $y_i$ for $i \in [n]$ and $j \in [d]$.
In this paper, we consider a situation where some of the elements in $X$ are missing.
The set of missing elements is denoted as $\cM \subset \{(i, j)\}_{i \in [n], j \in [d]}$, and its size is written as $M=|\cM|$. 
We denote the true input matrix as $X^*$, and its $i$-th row, $j$-th column, $(i,j)$-th element are similarly written as $\bm x^*_{i \cdot}$, $\bm x^*_{\cdot j}$, $x^*_{ij}$, respectively. 
Note that we cannot actually observe $x^*_{ij}, (i, j) \in \cM$, while the remaining elements of $X^*$ are the same as the corresponding elements of $X$, i.e., $x^*_{ij} = x_{ij} \text{ if } (i, j) \notin \cM$.

For now, consider a hypothetical situation where we can observe $X^*$.
We use a class of learning algorithms formulated as a penalized empirical risk minimization problem in the form of 
\begin{align}
 \label{eq:problem-primal}
 \bm w^* := \argmin_{\bm w \in \RR^d} \frac{1}{n} \sum_{i \in [n]} \ell(y_i, \bm w^\top \bm x^*_i) + \rho(\bm w),
\end{align}
where $\ell: \cY \times \RR \to \RR_+$ is a convex loss function, and $\rho: \RR^d \to \RR_+$ is a convex penalty function. 
The class \eq{eq:problem-primal} contains many well-known learning algorithms. 

When the training input matrix $X$ contains missing entries, we cannot obtain the optimal model parameters $\bm w^*$. 
A commonly taken heuristic approach for circumventing this difficulty is to construct a synthetic input matrix $\hat{X}$ either by discarding instances that contain missing entries or by imputing fixed values to the missing entries, and then solve the problem in \eq{eq:problem-primal} by using $(\hat{X}, \bm y)$ as the training set instead of $(X^*, \bm y)$. 
As discussed in Section \ref{sec:overview}, this heuristic approach completely ignores the effect of the uncertainty in missing entries.

As noted, in this paper, we represent the uncertainty in a missing entry as an interval.
For a missing entry $(i, j) \in \cM$, let $[\underline{x_{ij}}, \overline{x_{ij}}]$ represent the interval in which the true value $x^*_{ij}$ is located, where $\underline{x_{ij}} < \overline{x_{ij}}$ are the smallest and largest possible value that $x^*_{ij}$ takes, respectively. 
We also denote the uncertainty in missing entries in the entire input matrix $X$ as $[\underline{X}, \overline{X}]$, where, for an observed element $(i, j) \notin \cM$, $\underline{x_{ij}} = \overline{x_{ij}} = x_{ij}$. 
When we write $X^\prime \in [\underline{X}, \overline{X}]$, $X^\prime$ represents an arbitrary matrix such that all the elements satisfy $x^\prime_{ij} \in [\underline{x_{ij}}, \overline{x_{ij}}]$. 

Our goal is to fully incorporate the uncertainty in missing entries for evaluating the uncertainty in prediction results. 
To this end, let us consider a set of trained model parameters
\begin{align}
 \label{eq:set_of_parameter}
 \cW^* := \{\argmin_{\bm w \in \RR^d} \frac{1}{n} \ell(y_i, \bm w^\top {\bm x}^\prime_{i \cdot}) + \rho(\bm w) \mid X^\prime \in [\underline{X}, \overline{X}]\}.
\end{align}
Then, for a test input $\bm x \in \cX$, the uncertainty in the prediction result is also represented as an interval
\begin{align*}
 g(\bm w^{*\top} \bm x) \in [\underline{g(\bm w^{*\top} \bm x)}, \overline{g(\bm w^{*\top} \bm x)}], 
\end{align*}
where
\begin{subequations} 
\label{eq:prediction_bounds}
\begin{align}
 \underline{g(\bm w^{*\top} \bm x)} &:= \mathop{\rm inf}_{\bm w \in \cW^*} g(\bm w^\top \bm x), \\
 \overline{g(\bm w^{*\top}\bm x)} &:= \mathop{\rm sup}_{\bm w \in \cW^*} g(\bm w^\top \bm x).
\end{align}
\end{subequations}
Namely,
the set of parameters $\cW^*$ in \eq{eq:set_of_parameter} represents the collection of all possible solutions of the penalized empirical risk minimization problems in \eq{eq:problem-primal} when the input matrix $X^\prime$ is arbitrarily chosen from the interval $[\underline{X}, \overline{X}]$, 
while 
$\underline{g(\bm w^{*\top} \bm x)}$
and 
$\overline{g(\bm w^{*\top} \bm x)}$
in \eq{eq:prediction_bounds}
respectively indicate the smallest and largest possible prediction result when the uncertainty in missing entries is incorporated. 

Unfortunately, it is not possible to obtain $\cW^*$ in practice because, by definition, it requires us to solve infinitely many penalized empirical risk minimization problems. 
Our fundamental approach to tackling this difficulty involves developing a computationally feasible method that can efficiently compute 
\begin{subequations} %11:47�̎��Q
 \begin{align}
  \label{eq:bound_prediction_L}
  &L(\underline{g(\bm w^{*\top} \bm x)}) \le \underline{g(\bm w^{*\top} \bm x)} \text{ and }\\
  \label{eq:bound_prediction_U}
  &U(\overline{g(\bm w^{*\top} \bm x)}) \ge \overline{g(\bm w^{*\top} \bm x)}, 
 \end{align}
\end{subequations}
where $L(\underline{g(\bm w^{*\top} \bm x)})$ and $U(\overline{g(\bm w^{*\top} \bm x)})$ are a lower bound of the smallest possible prediction result and an upper bound of the largest possible prediction result, respectively.\footnote{
In the case of binary classification, e.g., by SVM, the uncertainty in the prediction results for the output domain $\cY = \{\pm 1\}$ is given by considering a lower and an upper bound of the linear discriminant function from
$L(\underline{\bm w^{*\top} \bm x})$
and 
$U(\overline{\bm w^{*\top} \bm x})$. 
The uncertainty in the predicted labels in $\{\pm 1\}$ is then given by
\begin{align*}
 g(\bm w^{*\top} \bm x) = \mycase{
 +1 & \text{ if } L(\underline{\bm w^{*\top} \bm x)}) > 0, \\
 -1 & \text{ if } U(\overline{\bm w^{*\top} \bm x)}) < 0, \\
 \text{unknown} & \text{ otherwise}. 
 }
\end{align*}
%$g(\bm w^{*\top} \bm x) = +1$ if $L(\underline{\bm w^{*\top} \bm x)}) > 0$,
%$g(\bm w^{*\top} \bm x) = -1$ if $L(\overline{\bm w^{*\top} \bm x)}) < 0$,
%and 
%``unknown''
%otherwise. 
}
If these bounds are sufficiently tight, we can use them for evaluating the uncertainty in the prediction results. 

% % % % % % % % % % % % % % % % % % % % % % % % % % % % % %
\subsection{Technical Preliminaries for Convex Analysis} \label{subsec:technice_preliminaries}
% % % % % % % % % % % % % % % % % % % % % % % % % % % % % %
%
Here, we present technical preliminaries for convex analysis which are used for constructing the proposed method in the next section. 
For a comprehensive introduction to convex analysis, see, e.g., \cite{Boyd04a}. 
In this subsection, for notational simplicity, we denote the input matrix as $X$, and assume that there are no missing entries in $X$. 

Let us denote the objective function of the empirical risk minimization problem in \eq{eq:problem-primal} as 
\begin{align}
 P_{X}(\bm w) := \frac{1}{n} \sum_{i \in [n]} \ell(y_i, \bm w^\top \bm x_i) + \rho(\bm w). 
\end{align}
The method proposed in the next section can be applied to the problem in \eq{eq:problem-primal}
when 
the loss function
$\ell$
and
the penalty function
$\rho$
satisfy certain conditions. 

The {\em dual problem} of
\eq{eq:problem-primal} is written, 
by using the convex conjugates $\ell^*$ and $\rho^*$\footnote{For $\ell$, we take the convex conjugate with respect to the second argument.}, 
as
\begin{align}
 & \textstyle
 \bm \alpha^*
 :=
 \argmax_{\bm \alpha \in {\rm dom} \ell^*, \frac{1}{n} X^\top \bm \alpha \in {\rm dom} \rho^*}
 D_X(\bm \alpha),
\quad
 \text{where}
 \nonumber \\
 & \textstyle
 D_X(\bm \alpha)
 := - \frac{1}{n} \sum_{i \in [n]} \ell^*(y_i, -\alpha_i) - \rho^*(\frac{1}{n} X^\top \bm \alpha),
\label{eq:problem-dual}
\end{align}
where ${\rm dom}\cdot$ denotes the domain of the function.
See, for example, Corollary 31.2.1 in \cite{rockafellar1970convex} for the details of dual problem derivation. 
Between the primal optimal solution
$\bm w^*$
and the dual optimal solution
$\bm \alpha^*$, 
the following relationships are known to hold under certain regularity conditions: 
\begin{align}
& \bm{w}^* \in \left.
(\partial/\partial \bm {v})\rho^*(\bm{v})\right|_{\bm{v} = \frac{1}{n}X^\top\bm{\alpha}^*},&&\text{({\em KKT condition})} \label{ex:KKT-primal}\\
& \alpha^*_i \in -\left.(\partial/\partial \bm {v})\ell(y_i, v)\right|_{v = \bm{w}^{*\top}\bm{x}_{i\cdot}},&&\text{({\em KKT condition})} \label{ex:KKT-dual}\\
& D_X(\bm \alpha^*) = P_X(\bm{w}^*). &&\text{({\em strong duality})} \label{ex:strong-duality}
\end{align}
The method proposed in the next section can be applied to any convex loss functions such as the squared loss function for regression, the hinge loss function for SVM, and the logistic loss function for logistic regression. 
See Table~\ref{tb:example-loss-penalty} for these loss function specifications, their convex conjugates, and their subderivatives. 

On the other hand, the penalty function $\rho$ must satisfy the following properties.
\begin{definition}[$\lambda$-strongly convex decomposable penalty] \label{df:strongly-convex-decomposable penalty}
A penalty function
$\rho$
is said to be
\emph{$\lambda$-strongly convex} for a
$\lambda > 0$ 
if 
%\begin{align*}
 $\rho(\bm{w}_1) - \rho(\bm{w}_2)
 \geq
 \partial \rho(\bm{w}_1)^\top (\bm{w}_1 - \bm{w}_2) +
 (\lambda/2)\|\bm{w}_1 - \bm{w}_2\|_2^2$
%\end{align*}
holds for any
 $\bm{w}_1, \bm{w}_2 \in \RR^d$.
 Furthermore, 
 it is said to be \emph{decomposable}
 if it is written as
\begin{align*}
 \rho(\bm w) = \sum_{j \in [d]} \rho_j(w_j),
 \end{align*}
where $\rho_j: \RR \to \RR$, $j \in [d]$, is a convex function. 
\end{definition}
Definition \ref{df:strongly-convex-decomposable penalty} includes the $L_2$ penalty, the elastic net penalty, and some other popular penalties. 
See Table~\ref{tb:example-loss-penalty} for these penalty function specifications, their convex conjugates, and their subderivatives. 

\begin{table*}[tp]
\caption{Examples of loss functions $\ell$ and penalty functions $\rho$ for which the proposed IPUB method is applicable}
\label{tb:example-loss-penalty}
\begin{center}
{\small
\begin{tabular}{ccc}
\hline
\hline
Loss function $\ell(y_i, \bm w^\top\bm x_{i\cdot} )$ & $\ell^*(y_i, -\alpha_i )$ & $\displaystyle \left.\frac{\partial}{\partial v}\ell(y_i, v)\right|_{v = \bm{w}^{*\top}\bm{x}_{i\cdot}}$ \\
\hline
\hline
	$\displaystyle \begin{matrix}
	(y_i - \bm w^\top \bm x_{i\cdot})^2 \\
	(\text{{\em Squared loss}};\\
	y_i\in\RR)
	\end{matrix}$
	& $\displaystyle \frac{1}{4} \alpha_i(\alpha_i - 4 y_i)$\quad($\displaystyle \alpha_i\in\RR$)
	& $\displaystyle 2(\bm{w}^{*\top}\bm{x}_{i\cdot} - y_i)$
	\\
\hline
	$\displaystyle \begin{matrix}
	[1 - y_i \bm w^\top \bm x_{i\cdot}]_+~{}^{\text{*1}} \\
	(\text{{\em Hinge loss}};\\
	y_i\in\{-1, +1\})
	\end{matrix}$
	& $\displaystyle -\frac{\alpha_i}{y_i}$\quad($\displaystyle 0\leq \frac{\alpha_i}{y_i}\leq 1$)
	& $\begin{cases}
		0 & (\text{if $y_i \bm{w}^{*\top}\bm{x}_{i\cdot} > 1$}) \\
		[-y_i, 0]~{}^{\text{*2}} & (\text{if $y_i \bm{w}^{*\top}\bm{x}_{i\cdot} = 1$}) \\
		-y_i & (\text{if $y_i \bm{w}^{*\top}\bm{x}_{i\cdot} < 1$})
		\end{cases}$
	\\
\hline
	$\displaystyle \begin{matrix}
	\log(1 + \exp(- y_i \bm w^\top \bm x_{i\cdot} )) \\
	(\text{{\em Logistic loss}};\\
	y_i\in\{-1, +1\})
	\end{matrix}$
	& $\displaystyle \begin{matrix}
	\displaystyle \left(1 - \frac{\alpha_i}{y_i}\right)\log|y_i - \alpha_i|
	\\ \displaystyle + \frac{\alpha_i}{y_i}\log|\alpha_i| - \log|y_i|
	\\ \text{($\displaystyle 0\leq \frac{\alpha_i}{y_i}\leq 1$)~${}^{\text{*3}}$} 
	\end{matrix}$
	& $\displaystyle -\frac{y_i}{1 + \exp(y_i\bm{w}^{*\top}\bm{x}_{i\cdot})}$
	\\
\hline
\hline
\\
\hline
\hline
Penalty function $\rho(\bm{w})$ & $\displaystyle \rho^*(\frac{1}{n} X^\top \bm \alpha)$ & $\displaystyle \left.\frac{\partial}{\partial\bm{v}}\rho^*(\bm{v})\right|_{\bm{v} = \frac{1}{n}X^\top\bm{\alpha}^*}$\\
\hline
\hline
	$\displaystyle \begin{matrix}
	\displaystyle\frac{\lambda}{2}\|\bm w\|_2^2$~${}^{\text{*4}} \\
	(\text{$L_2$~{\em penalty}})
	\end{matrix}$
	& $\displaystyle \frac{1}{2\lambda n^2}\|X^\top \bm \alpha\|_2^2$
	& $\displaystyle \frac{1}{\lambda n}X^\top \bm \alpha^*$
	\\
\hline
	$\displaystyle \begin{matrix}
	\displaystyle \frac{\lambda}{2}\|\bm w\|_2^2 + \kappa\|\bm w\|_1~{}^{\text{*4}} \\
	(\text{{\em Elastic net penalty}})
	\end{matrix}$
	& $\displaystyle \begin{matrix}
	\displaystyle \frac{1}{2\lambda n^2}\times \\
	\displaystyle \sum_{j\in[d]}([|\bm \alpha^\top\bm{x}_{\cdot j}| - n\kappa]_+)^2$~${}^{\text{*1}}
	\end{matrix}$
	& $\displaystyle \begin{matrix}
	\displaystyle \left[ \frac{[F_j]_+ - [G_j]_+}{\lambda n}\right]_{j\in[d]}~{}^{\text{*1}}
	\\ (F_j = \bm{\alpha}^{*\top}\bm{x}_{\cdot j} - n\kappa,
	\\ G_j = -\bm{\alpha}^{*\top}\bm{x}_{\cdot j} - n\kappa)
	\end{matrix}$
	\\
\hline
\hline
\end{tabular}
\\
\begin{minipage}[c]{1.0\textwidth}
	*1: $[t]_+ := \max\{0, t\}$.
	\quad
	*2: If $y_i = -1$, replace $[-y_i, 0]$ with $[0, -y_i]$.
	\quad
	*3: If $\frac{\alpha_i}{y_i} = 0$ or $1$, treat $\ell^*(y_i, -\alpha_i ) = 0$.
	\quad
	*4: $\lambda, \kappa > 0$ are tuning parameters.
\end{minipage} 
}%\small
\end{center}
\end{table*}

%%%%%%%%%%%%%%%%%%%%%%%%%%%%%%%%%%%%%%%%%%%%%%%%%%%%%%%%%%%%
\section{Proposed Method} \label{sec:interval-privacy-convex}
%%%%%%%%%%%%%%%%%%%%%%%%%%%%%%%%%%%%%%%%%%%%%%%%%%%%%%%%%%%%

In this section, we present our main results.
Proofs of the theorems and the corollary in this section are all presented in Appendix \ref{sec:proof}.

Given a test input $\bm x \in \cX$, our goal is to evaluate the prediction uncertainty in the form of an interval
\begin{align*}
 g(\bm w^{*\top} \bm x) \in [L(\underline{g(\bm w^{*\top} \bm x)}), U(\overline{g(\bm w^{*\top} \bm x)})],
\end{align*}
where
$L(\underline{g(\bm w^{*\top} \bm x)})$
and 
$U(\overline{g(\bm w^{*\top} \bm x)})$
are the lower bound of the smallest possible prediction result 
and 
the upper bound of the largest possible prediction result,
respectively. 
Our approach with the proposed IPUB method involves the following two steps:
\\

\noindent
{\bf step 1}. Compute a superset $\cW \supseteq \cW^*$ from $\underline{X}$, $\overline{X}$ and $\bm y$. 
\\

\noindent
{\bf step 2}. Compute $L(\underline{g(\bm w^{*\top} \bm x)})$ and $U(\overline{g(\bm w^{*\top} \bm x)})$ from $\cW$. 
\\

\noindent
As described below, the computational cost of step 1 is the cost of solving a single penalized empirical risk minimization problem in \eq{eq:problem-primal} and $O(M)$ additional computations where $M$ is the number of missing entries.
Once we compute a superset $\cW \supseteq \cW^*$, the computational cost of step 2 is $O(d)$ for each test input $\bm x \in \cX \subseteq \RR^d$.

\subsection{Step 1}
\begin{algorithm}[tp]
\caption{Step 1}
\label{alg:compute-duality-gap}
\begin{algorithmic}[1]
% \REQUIRE
% $\underline{X}, \overline{X}, \bm y$%, \cI, \cJ, \{\cI_\cM(j)\}_{j \in \cJ}, \{\cJ_\cM(i)\}_{i \in \cI}$
% \ENSURE
% $\cW$ ~~ (the center $\bm w^\prime$ and the radius $\sqrt{2 \Delta/\lambda}$)
 \STATE {\bf Input} $\underline{X}, \overline{X}, \bm y$
 \STATE
 Select an arbitrary $X^\prime \in [\underline{X}, \overline{X}]$
 \STATE
 Compute $\bm w^\prime$, $\bm \alpha^\prime$ as in  \eq{eq:sol} by using $X^\prime$
 \STATE
 $\Delta \gets 0$.
 \FORALL{$i\in{\cI}$}
 \STATE $p^-_i\gets \bm w^{\prime \top}\bm x^\prime_{i\cdot}$,
 $p^+_i\gets \bm w^{\prime \top}\bm x^\prime_{i\cdot}$.
 \FORALL{$j\in\cJ_\cM(i)$}
 \STATE $p^-_i\gets p^-_i + (w^\prime_j > 0 ~?~ w^\prime_j \underline{x_{ij}} : w^\prime_j \overline{x_{ij}}) - w^\prime_jx^\prime_{ij}$.
 \STATE $p^+_i\gets p^+_i + (w^\prime_j > 0 ~?~ w^\prime_j \overline{x_{ij}} : w^\prime_j \underline{x_{ij}}) - w^\prime_jx^\prime_{ij}$.
 \ENDFOR
 \STATE $\Delta\gets\Delta+\frac{1}{n}[\max\{ \ell(y_i, p^-_i), \ell(y_i, p^+_i) \} - \ell(y_i, \bm w^{\prime\top}\bm x^\prime_{i\cdot})]$
 \ENDFOR
 \FORALL{$j\in{\cJ}$}
 \STATE $q^-_j\gets \bm \alpha^{\prime \top}\bm x^\prime_{\cdot j}$,
 $q^+_j\gets \bm \alpha^{\prime \top}\bm x^\prime_{\cdot j}$.
 \FORALL{$i\in\cI_\cM(j)$}
 \STATE $q^-_j\gets q^-_j + (\alpha^\prime_i > 0 ~?~ \alpha^\prime_i \underline{x_{ij}} : \alpha^\prime_i \overline{x_{ij}}) - \alpha^\prime_ix^\prime_{ij}$
 \STATE $q^+_j\gets q^+_j + (\alpha^\prime_i > 0 ~?~ \alpha^\prime_i \overline{x_{ij}} : \alpha^\prime_i \underline{x_{ij}}) - \alpha^\prime_ix^\prime_{ij}$
 \ENDFOR
 \STATE $\Delta\gets\Delta + \max\{ \rho^*_j(\frac{1}{n} q^-_j), \rho^*_j(\frac{1}{n} q^+_j) \} - \rho^*_j(\frac{1}{n}\bm \alpha^{\prime \top}\bm x^\prime_{\cdot j})$
 \ENDFOR
 \STATE
 {\bf Return} $\cW$, i.e., the center $\bm w^\prime$ and the radius $\sqrt{2 \Delta/\lambda}$
\end{algorithmic}
\end{algorithm}
The pseudo-code for step 1 is presented in Algorithm~\ref{alg:compute-duality-gap}. 
First, we select an arbitrary input matrix $X^\prime$ from the interval, i.e., 
\begin{align*}
 X^\prime \in [\underline{X}, \overline{X}].
\end{align*}
Next, we compute the primal and dual solutions by solving the penalized empirical risk minimization problem in \eq{eq:problem-primal} by using $(X^\prime, \bm y)$ as the training set, i.e., 
\begin{subequations}
 \label{eq:sol}
  \begin{align}
   \label{eq:sol_primal}
   \bm w^\prime &= \argmin_{\bm w} P_{X^\prime}(\bm w), 
   \\
   \label{eq:sol_dual}
   \bm \alpha^\prime &= \argmax_{\bm \alpha} D_{X^\prime}(\bm \alpha).
  \end{align}
\end{subequations}
Then, we compute a superset $\cW \supseteq \cW^*$ as 
\begin{align}
 \label{eq:cW}
 \cW := \{\bm w \in \RR^d \mid \|\bm w - \bm w^\prime \|_2 \le \sqrt{2 \Delta/\lambda }\}, 
\end{align}
where
\begin{align}
 \label{eq:duality_gap}
 \Delta
 :=
 \max_{X^{\prime \prime} \in [\underline{X}, \overline{X}]} P_{X^{\prime \prime}}(\bm w^\prime)
 -
 \min_{X^{\prime \prime} \in [\underline{X}, \overline{X}]} D_{X^{\prime \prime}}(\bm \alpha^\prime).
\end{align}

The following theorem formally states that $\cW$ in \eq{eq:cW} is actually a superset of $\cW^*$ in \eq{eq:set_of_parameter}. 
\begin{theorem} \label{th:interval-primal-solution}
Suppose that the penalty function $\rho$ satisfies the properties in Definition~\ref{df:strongly-convex-decomposable penalty}.
Let $X^\prime \in [\underline{X}, \overline{X}]$ be any input matrix in the interval. 
Furthermore, 
let 
$\bm{w}^\prime$
and
$\bm{\alpha}^\prime$
be the primal and dual solutions of the problem in \eq{eq:problem-primal}
when the input matrix is $X^\prime$ 
as defined in
\eq{eq:sol_primal} and 
\eq{eq:sol_dual}, respectively. 
Then,
the set of model parameters $\cW$
defined in 
\eq{eq:cW}
is a superset of
$\cW^*$,
i.e.,
$\cW \supseteq \cW^*$.
\end{theorem}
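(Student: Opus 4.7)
The proof plan is to use the classical duality-gap argument: since $\rho$ is $\lambda$-strongly convex, so is $P_{X^*}$ for any fixed $X^*$, and therefore the distance between the primal minimizer $\bm w^*$ for training matrix $X^*$ and any candidate point $\bm w'$ can be controlled by the primal suboptimality of $\bm w'$ under $X^*$. The plan is then to upper-bound this suboptimality uniformly over all $X^* \in [\underline{X},\overline{X}]$ by the quantity $\Delta$ in \eq{eq:duality_gap}.

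Concretely, I would proceed in three steps. First, fix an arbitrary $X^* \in [\underline{X},\overline{X}]$ and let $\bm w^*$ denote the corresponding primal optimum (so that $\bm w^*$ ranges over all of $\cW^*$ as $X^*$ varies). Apply $\lambda$-strong convexity of $P_{X^*}$ together with the first-order optimality condition $\bm 0 \in \partial P_{X^*}(\bm w^*)$ from Definition~\ref{df:strongly-convex-decomposable penalty} to obtain
\begin{align*}
P_{X^*}(\bm w') - P_{X^*}(\bm w^*) \;\geq\; \tfrac{\lambda}{2}\,\|\bm w' - \bm w^*\|_2^2.
\end{align*}
Second, upper-bound the left-hand side. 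Using strong duality \eq{ex:strong-duality} for the training matrix $X^*$, we have $P_{X^*}(\bm w^*)=D_{X^*}(\bm\alpha^*_{X^*})$, and since $\bm\alpha'$ is only the maximizer of $D_{X'}$ (not of $D_{X^*}$), we get $D_{X^*}(\bm\alpha^*_{X^*}) \geq D_{X^*}(\bm\alpha')$. Therefore
\begin{align*}
P_{X^*}(\bm w') - P_{X^*}(\bm w^*) \;\leq\; P_{X^*}(\bm w') - D_{X^*}(\bm\alpha') \;\leq\; \max_{X''\in[\underline{X},\overline{X}]}P_{X''}(\bm w') - \min_{X''\in[\underline{X},\overline{X}]}D_{X''}(\bm\alpha') \;=\; \Delta.
\end{align*}
Third, chain the two inequalities to conclude $\|\bm w' - \bm w^*\|_2 \leq \sqrt{2\Delta/\lambda}$, which is exactly the statement $\bm w^* \in \cW$. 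Since $X^*$ was arbitrary, this gives $\cW^* \subseteq \cW$.

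The only nontrivial points I expect are bookkeeping ones rather than deep obstacles: (i) verifying that $P_{X^*}$ inherits $\lambda$-strong convexity from $\rho$ uniformly in $X^*$, which follows because the loss term is convex and the strong-convexity inequality in Definition~\ref{df:strongly-convex-decomposable penalty} additively carries over to $P_{X^*}$; and (ii) justifying strong duality for the perturbed problem \eq{eq:problem-primal} at $X^*$, which requires the standard regularity conditions already invoked in \eq{ex:strong-duality}. Note that this argument treats $\Delta$ abstractly as defined in \eq{eq:duality_gap}; the explicit coordinate-wise bound computed by Algorithm~\ref{alg:compute-duality-gap} (via the $p^\pm_i$, $q^\pm_j$ constructions and the decomposability of $\rho$) is an upper surrogate for $\Delta$ and will need its own justification, but that is a separate claim from Theorem~\ref{th:interval-primal-solution}, which only asserts the set containment.
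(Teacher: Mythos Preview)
Your proposal is correct and follows essentially the same route as the paper's proof: the paper packages your first step as a separate lemma (Lemma~\ref{lm:strong-convexity-sphere}) showing $\|\bm w^* - \bm w'\|_2 \le \sqrt{(2/\lambda)(P_{X^*}(\bm w') - P_{X^*}(\bm w^*))}$ from strong convexity, and then, exactly as you do, replaces $P_{X^*}(\bm w^*)$ by $D_{X^*}(\bm\alpha')$ via strong duality and dual suboptimality of $\bm\alpha'$, finally taking the worst case over $X^*\in[\underline X,\overline X]$ to arrive at $\Delta$. Your closing remark that the explicit coordinate-wise formula for $\Delta$ is a separate claim is also accurate---that is the content of Theorem~\ref{th:interval-primal-computation}.
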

Theorem \eqref{th:interval-primal-solution} is based on the strong convexity
(Definition~\ref{df:strongly-convex-decomposable penalty}) of the primal objective function $P$.
By strong convexity, for any $X^{\prime\prime}\in[\underline{X}, \overline{X}]$,
the difference between two optimal solutions $\|\bm{w}^\prime - \bm{w}^{\prime\prime}\|_2$
can be bounded from above by the difference between the objective function values
$P_{X^{\prime\prime}}(\bm{w}^\prime) - P_{X^{\prime\prime}}(\bm{w}^{\prime\prime})$,
which can be bounded by $\Delta$.
See Lemma \ref{lm:strong-convexity-sphere} in Appendix \ref{sec:proof} for details.

Next, we observe that, given the solution $\bm w^\prime$, the cost of computing the superset $\cW$ in \eq{eq:cW} is only $O(M)$.
Specifically, in the following theorem we state that $\Delta$ in \eq{eq:duality_gap} can be computed with a cost of $O(M)$ if some relevant quantities have already been computed. 
%
% --------------------------------------------------
% Theorem 2
% --------------------------------------------------
\begin{theorem} \label{th:interval-primal-computation}
Suppose that the same condition as Theorem \ref{th:interval-primal-solution} holds. 
Furthermore, assume that the following values have already been computed:
 \begin{subequations} 
\label{eq:precomputed}
 \begin{align}
 \label{eq:precomputed1}
 &
 \bm w^{\prime \top} \bm x^\prime_{i \cdot} ~ \forall i \in [n], 
 ~
 \bm \alpha^{\prime \top} \bm x^\prime_{\cdot j} ~ \forall j \in [d],
 \\
 \label{eq:precomputed2}
 &
 \cI := \{i \in [n] \mid (i, j) \in \cM\}, 
 \\
 \label{eq:precomputed3}
 &
 \cJ := \{j \in [d] \mid (i, j) \in \cM\}, 
 \\
 \label{eq:precomputed4}
 &
 \cI_{\cM}(j) := \{i \in [n] \mid (i, j) \in \cM\} ~ \forall j \in \cJ, 
 \\
 \label{eq:precomputed5}
 &
 \cJ_{\cM}(i) := \{j \in [d] \mid (i, j) \in \cM\} ~ \forall i \in \cI.
 \end{align}
 \end{subequations}
Then $\Delta$ in \eq{eq:duality_gap} is computed with a cost $O(M)$ as
\begin{align}
\Delta = & \frac{1}{n} \sum_{i\in{\cI}} [ \max\{ \ell(y_i, p^-_i), \ell(y_i, p^+_i) \} - \ell(y_i, \bm{w}^{\prime \top}\bm x^\prime_{i\cdot}) ]
\nonumber
 \\
& \! \! + \sum_{j\in{\cJ}} [ \textstyle \max\{ \rho^*_j(\frac{1}{n} q^-_j), \rho^*_j(\frac{1}{n} q^+_j) \} - \rho^*_j(\frac{1}{n} \bm{\alpha}^{\prime \top}\bm x^\prime_{\cdot j}) ],
\label{eq:duality-gap-upper-bound-limited}
\end{align}
where
\begin{align*}
p^-_i := & \bm w^{\prime\top}\bm x^\prime_{i\cdot} + \sum_{j\in\cJ_{\cM}(i)}[(w^\prime_j > 0 ~?~ w^\prime_j \underline{x_{ij}} : w^\prime_j \overline{x_{ij}}) - w^\prime_j x^\prime_{ij}], \\
p^+_i := & \bm w^{\prime\top}\bm x^\prime_{i\cdot} + \sum_{j\in\cJ_{\cM}(i)}[(w^\prime_j > 0 ~?~ w^\prime_j \overline{x_{ij}} : w^\prime_j \underline{x_{ij}}) - w^\prime_j x^\prime_{ij}], \\
%\end{align*}
%\begin{align*}
q^-_j := & \bm \alpha^{\prime\top}\bm x^\prime_{\cdot j} + \sum_{i\in\cI_{\cM}(j)}[(\alpha^\prime_i > 0 ~?~ \alpha^\prime_i \underline{x_{ij}} : \alpha^\prime_i \overline{x_{ij}}) - \alpha^\prime_i x^\prime_{ij}], \\
q^+_j := & \bm \alpha^{\prime\top}\bm x^\prime_{\cdot j} + \sum_{i\in\cI_{\cM}(j)}[(\alpha^\prime_i > 0 ~?~ \alpha^\prime_i \overline{x_{ij}} : \alpha^\prime_i \underline{x_{ij}}) - \alpha^\prime_i x^\prime_{ij}].
\end{align*}
\end{theorem}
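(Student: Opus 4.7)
The plan is to rewrite $\Delta$ in a form where the only terms that vary with $X''$ are row/column-localised, then exploit convexity to evaluate each one-dimensional maximisation at an endpoint. The key algebraic identity is
\begin{align*}
\Delta &= \bigl[\max_{X''} P_{X''}(\bm w^\prime) - P_{X^\prime}(\bm w^\prime)\bigr] + \bigl[P_{X^\prime}(\bm w^\prime) - D_{X^\prime}(\bm \alpha^\prime)\bigr] \\
&\quad + \bigl[D_{X^\prime}(\bm \alpha^\prime) - \min_{X''} D_{X''}(\bm \alpha^\prime)\bigr],
\end{align*}
and the middle bracket vanishes by strong duality \eqref{ex:strong-duality}. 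So I would first invoke strong duality to kill the bulk of the terms, leaving two purely local increments around the reference matrix $X^\prime$.

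Next, I would decompose each remaining bracket. Because the loss is additive over instances, $\max_{X''} P_{X''}(\bm w^\prime) - P_{X^\prime}(\bm w^\prime)$ splits as $\frac{1}{n}\sum_{i\in[n]} \max_{\bm x''_{i\cdot}\in[\underline{\bm x_{i\cdot}},\overline{\bm x_{i\cdot}}]} \ell(y_i,\bm w^{\prime\top}\bm x''_{i\cdot}) - \frac{1}{n}\sum_{i\in[n]} \ell(y_i,\bm w^{\prime\top}\bm x^\prime_{i\cdot})$; for $i\notin\cI$ the row has no missing entries, so the two terms cancel. Similarly, the decomposability assumption from Definition \ref{df:strongly-convex-decomposable penalty} implies $\rho^*(\bm v) = \sum_{j\in[d]} \rho^*_j(v_j)$ (a short calculation from the conjugate definition), so $D_{X^\prime}(\bm \alpha^\prime) - \min_{X''} D_{X''}(\bm \alpha^\prime)$ splits as a sum over $j\in\cJ$ of per-column increments involving $\rho^*_j$.

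The key one-dimensional step is: for fixed $i\in\cI$, the map $\bm x''_{i\cdot} \mapsto \bm w^{\prime\top}\bm x''_{i\cdot}$ is linear, and over the axis-aligned box $[\underline{\bm x_{i\cdot}},\overline{\bm x_{i\cdot}}]$ its range is exactly the interval $[p^-_i,p^+_i]$, with the coordinate-wise extrema chosen exactly as in the ``$w^\prime_j>0\,?\,\ldots$'' rule in the statement (observed coordinates contribute $w^\prime_j x^\prime_{ij}$ and thus cancel in the correction term). Then, since $\ell(y_i,\cdot)$ is convex, its maximum over the interval $[p^-_i,p^+_i]$ is attained at one of the two endpoints, giving $\max\{\ell(y_i,p^-_i),\ell(y_i,p^+_i)\}$. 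The identical argument applied to $\rho^*_j$ (which is convex as the conjugate of a convex function) and to the linear map $\bm x''_{\cdot j}\mapsto \frac{1}{n}\bm\alpha^{\prime\top}\bm x''_{\cdot j}$ yields the $q^\pm_j$ expressions.

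For the complexity claim, using the precomputed \eqref{eq:precomputed1}--\eqref{eq:precomputed5}: each $p^\pm_i$ (resp.\ $q^\pm_j$) is obtained from $\bm w^{\prime\top}\bm x^\prime_{i\cdot}$ (resp.\ $\bm\alpha^{\prime\top}\bm x^\prime_{\cdot j}$) by a correction involving $|\cJ_\cM(i)|$ (resp.\ $|\cI_\cM(j)|$) terms, and $\sum_{i\in\cI}|\cJ_\cM(i)| = \sum_{j\in\cJ}|\cI_\cM(j)| = M$; the outer sums add $O(|\cI|)+O(|\cJ|) = O(M)$ constant-time loss/penalty evaluations, giving $O(M)$ overall. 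The main obstacle is really just bookkeeping: one has to be careful that the ``observed part'' of each row/column really does cancel (so that only indices in $\cI,\cJ$ contribute), and that the $\max$ over the product box genuinely decouples into independent per-row and per-column problems, which it does because $P_{X''}(\bm w^\prime)$ depends on $X''$ only through the $n$ inner products $\bm w^{\prime\top}\bm x''_{i\cdot}$ (rows decouple) and, by decomposability, $\rho^*(\tfrac{1}{n}X''{}^\top\bm\alpha^\prime)$ depends on $X''$ only through the $d$ inner products $\bm\alpha^{\prime\top}\bm x''_{\cdot j}$ (columns decouple).
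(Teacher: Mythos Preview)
Your proposal is correct and follows essentially the same route as the paper's proof: split $\Delta$ using strong duality at $X^\prime$, decouple the primal increment row-wise and the dual increment column-wise (via decomposability of $\rho^*$), drop indices outside $\cI,\cJ$, and use convexity of $\ell(y_i,\cdot)$ and $\rho^*_j$ to push each one-dimensional maximum to an endpoint of the linear image of the box. Your remark that decomposability of $\rho$ implies decomposability of $\rho^*$ is a small point the paper leaves implicit, and your $O(M)$ accounting via $\sum_{i\in\cI}|\cJ_\cM(i)|=\sum_{j\in\cJ}|\cI_\cM(j)|=M$ is more explicit than the paper's, but otherwise the arguments coincide.
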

\noindent
Note that
the assumption that the values in \eq{eq:precomputed} have been precomputed is not problematic
because
the quantities in \eq{eq:precomputed1} must have been computed when solving the optimization problem for computing $\bm w^\prime$ and $\bm \alpha^\prime$, while the quantities in \eq{eq:precomputed2} to \eq{eq:precomputed5} can be known beforehand because they are merely lists of the positions in the missing entries. 
In addition, the memory required to store all these values is only $O(n + d + M)$. 

\subsection{Step 2}
Because the superset $\cW \subseteq \cW^*$ in step 1 is a sphere in $\RR^d$, it is straightforward to compute
a lower bound
$L(\underline{g(\bm w^{*\top} \bm x)})$
and
an upper bound
$U(\overline{g(\bm w^{*\top} \bm x)})$
as in the following corollary.
\begin{corollary} \label{co:bound-inner-product}
 Assume that a set of model parameters $\cW$ is a superset of $\cW^*$ and is represented as a sphere with a center $\bm w^\prime$ and a radius $\sqrt{2 \Delta/\lambda}$, then 
 the prediction result
 $g(\bm w^{*\top} \bm x)$
 for a test input
 $\bm x \in \cX \subseteq \RR^d$
 is guaranteed to be within the following interval
 \begin{align}
  \label{eq:prediction_interval}
  g(\bm w^{*\top} \bm x) \in [L(\underline{g(\bm w^{*\top} \bm x)}), U(\overline{g(\bm w^{*\top} \bm x)})], 
 \end{align}
 where 
  \begin{align*}
   L(\underline{g(\bm w^{*\top} \bm x)}) &= \inf_{\bm w \in \cW} g(\bm w^\top \bm x) = g(\bm w^{\prime \top} \bm x - \|\bm x\|_2 \sqrt{2 \Delta/\lambda}), \\
   U(\overline{g(\bm w^{*\top} \bm x)}) &= \sup_{\bm w \in \cW} g(\bm w^\top \bm x) = g(\bm w^{\prime \top} \bm x + \|\bm x\|_2 \sqrt{2 \Delta/\lambda}).
  \end{align*}
\end{corollary}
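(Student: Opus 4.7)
The plan is to reduce the statement to an elementary optimization of a linear functional over a Euclidean ball, and then use the monotonicity of $g$ to push the bounds through. Concretely, by Theorem~\ref{th:interval-primal-solution}, the true parameter $\bm w^*$ corresponding to the unknown $X^* \in [\underline{X}, \overline{X}]$ belongs to $\cW^* \subseteq \cW$, so it suffices to bound $g(\bm w^\top \bm x)$ uniformly over $\bm w \in \cW$.

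First I would fix a test input $\bm x \in \RR^d$ and bound the linear part $\bm w^\top \bm x$ for $\bm w \in \cW$. Writing $\bm w = \bm w^\prime + \bm\delta$ with $\|\bm\delta\|_2 \le \sqrt{2\Delta/\lambda}$, the Cauchy--Schwarz inequality gives
\begin{align*}
|(\bm w - \bm w^\prime)^\top \bm x| \;=\; |\bm\delta^\top \bm x| \;\le\; \|\bm\delta\|_2 \|\bm x\|_2 \;\le\; \|\bm x\|_2\sqrt{2\Delta/\lambda},
\end{align*}
so $\bm w^\top \bm x \in [\bm w^{\prime\top}\bm x - \|\bm x\|_2\sqrt{2\Delta/\lambda},\; \bm w^{\prime\top}\bm x + \|\bm x\|_2\sqrt{2\Delta/\lambda}]$. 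Both endpoints are attained inside $\cW$ by choosing $\bm\delta = \pm \sqrt{2\Delta/\lambda}\, \bm x/\|\bm x\|_2$ (and the case $\bm x = \bm 0$ is trivial), which shows that the infimum and supremum of $\bm w^\top \bm x$ over $\cW$ equal the two endpoints exactly.

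Second, because $g: \RR \to \cY$ is monotonically non-decreasing, the infimum and supremum of $g(\bm w^\top \bm x)$ over $\cW$ are obtained by applying $g$ to the infimum and supremum of the linear part, respectively; this yields the two stated closed-form expressions for $L(\underline{g(\bm w^{*\top}\bm x)})$ and $U(\overline{g(\bm w^{*\top}\bm x)})$. Finally, since $\bm w^* \in \cW$, one gets $L(\underline{g(\bm w^{*\top}\bm x)}) \le g(\bm w^{*\top}\bm x) \le U(\overline{g(\bm w^{*\top}\bm x)})$, which is exactly \eqref{eq:prediction_interval}.

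There is no real obstacle: the only substantive ingredient is Theorem~\ref{th:interval-primal-solution} (inclusion $\cW^* \subseteq \cW$), and everything else reduces to Cauchy--Schwarz together with the monotonicity of $g$. The mildest point worth flagging is that $g$ is only assumed non-decreasing (and possibly discontinuous, e.g.\ the sign function for SVM), so one should invoke monotonicity in the order-theoretic sense (preserving $\le$) rather than continuity, and one should note that the sup/inf of $g(\bm w^\top \bm x)$ over $\cW$ are achieved at the same $\bm w$ that attain the sup/inf of $\bm w^\top \bm x$ thanks to the compactness of $\cW$ and monotonicity.
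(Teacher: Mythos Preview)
Your proposal is correct and follows essentially the same route as the paper: reduce via the substitution $\bm w = \bm w^\prime + \bm\delta$ to extremizing the linear form $\bm\delta^\top\bm x$ over the ball $\|\bm\delta\|_2 \le \sqrt{2\Delta/\lambda}$, identify the extremizers $\bm\delta = \pm\sqrt{2\Delta/\lambda}\,\bm x/\|\bm x\|_2$, and then invoke monotonicity of $g$. The only cosmetic difference is that you name Cauchy--Schwarz explicitly whereas the paper phrases the same step geometrically (``$\bm u$ directed opposite to $\bm x$'').
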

\noindent
The cost of computing the prediction interval in \eq{eq:prediction_interval} is $O(d)$.

%%%%%%%%%%%%%%%%%%%%%%%%%%%%%%%%%%%%%%%%%%%%%%%%%%%%%%%%%%%%
\section{Experiment} \label{sec:experiment}
%%%%%%%%%%%%%%%%%%%%%%%%%%%%%%%%%%%%%%%%%%%%%%%%%%%%%%%%%%%%
%
We explored the performance of the proposed IPUB method by comparing it with the Interval Newton (INewton) method \cite{hansen2003IA}.
We applied the two methods to three datasets taken from the UCI machine learning repository \cite{Asuncion07}, where approximately 90\% of each dataset was used for the training set, while the remaining 10\% was used as the test set (see Table \ref{tb:datasets-info}). Detailed setups are presented in Appendix \ref{ch:data-setups}.
For brevity, we only present results for the logistic regression in which we predict the probability of the class label being $+1$ by using the sigmoid function $g(\bm w^{*\top} \bm x) = 1/(1 + \exp(-\bm w^{*\top} \bm x))$. 
We compared the performance of the two methods in terms of prediction uncertainty interval length and computation time. 
The former is evaluated as the difference between the upper bound and the lower bound of the predicted probability, i.e., $U(\overline{g(\bm w^{*\top} \bm x)}) - L(\underline{g(\bm w^{*\top} \bm x)})$.
All the computations were carried out with an Intel Xeon CPU (3.10GHz clock) and 256GB RAM.

\begin{table}[t]
\caption{Datasets used for experimentation. All are taken from the UCI Machine Learning Repository \cite{Asuncion07}.}
\label{tb:datasets-info}
\begin{center}
\begin{tabular}{cccc}
\hline
        & \multicolumn{2}{c}{Number of} & \\
        & \multicolumn{2}{c}{instances} & Number of \\
\cline{2-3}
Dataset & Training & Test               & features \\
\hline
{\tt default} {\scriptsize (default of credit card clients)} &    27,000 &  3,000 & 28 \\
{\tt drive} {\scriptsize (Sensorless Drive Diagnosis)}       &    52,658 &  5,851 & 48 \\
{\tt HIGGS}                                                  & 1,100,000 & 11,000 & 28 \\
\hline
\end{tabular}
\end{center}
\end{table}

\subsection{Settings}
\paragraph{Experimental Setting Parameters}
We considered various experimental settings by adjusting three parameters. 
The first parameter is the missing-value proportion $b \in \{0.01, 0.001\}$; we randomly chose $n \times d \times b$ elements of the input matrix and regard them as missing entries. 
The second parameter is the missing value interval length $\alpha \in \{0.5, 0.9\}$; we set the interval for each missing element to [$(1-\alpha)/2$-quantile, $(1+\alpha)/2$-quantile] of the original values in the element. 
The third parameter is the penalty parameter $\lambda \in \{0.1, 1.0\}$.

\paragraph{Implementation of IPUB}
In the implementation of Algorithm \ref{alg:compute-duality-gap},
$X^\prime$ is chosen as $(\underline{X} + \overline{X})/2$,
$\bm{w}^\prime$ is computed using the trust region Newton method
implemented in LIBLINEAR \cite{Fan08b}, and
$\bm{\alpha}^\prime$ is computed from $\bm{w}^\prime$ by \eqref{ex:KKT-dual}.

\paragraph{Interval Newton (INewton) Method}
We compared the performances of the proposed IPUB method with an interval analysis method called the Interval Newton (INewton) method \cite{hansen2003IA}.
In the INewton method, every operation in each Newton step is conducted for parameters and data represented as intervals. 
The computational cost of the INewton method is $O(Knd^3)$, where $K$ is the number of Newton iteration steps; the cost is proportional to $d^3$ due to the matrix inverse operation in each Newton step. 

\subsection{Results}

\paragraph{Prediction Uncertainty Interval Length}
Figure~\ref{fig:uncertainty-histogram} shows results pertaining to the prediction uncertainty interval length.
Each plot contains a green histogram of prediction uncertainty interval lengths $U(\overline{g(\bm w^{*\top} \bm x)}) - L(\underline{g(\bm w^{*\top} \bm x)})$ by the IPUB method, and a purple histogram of the counterpart prediction uncertainty interval lengths obtained using the INewton method for each test input $\bm x$. 
In most cases, the prediction uncertainty interval lengths associated with the IPUB method are much smaller than those for the INewton method.
Because both methods guarantee that the predicted values are within the intervals, the results testify to the superiority of the IPUB method. 

\begin{figure*}[tp]
\begin{center}
\includegraphics[width=\hsize,clip]{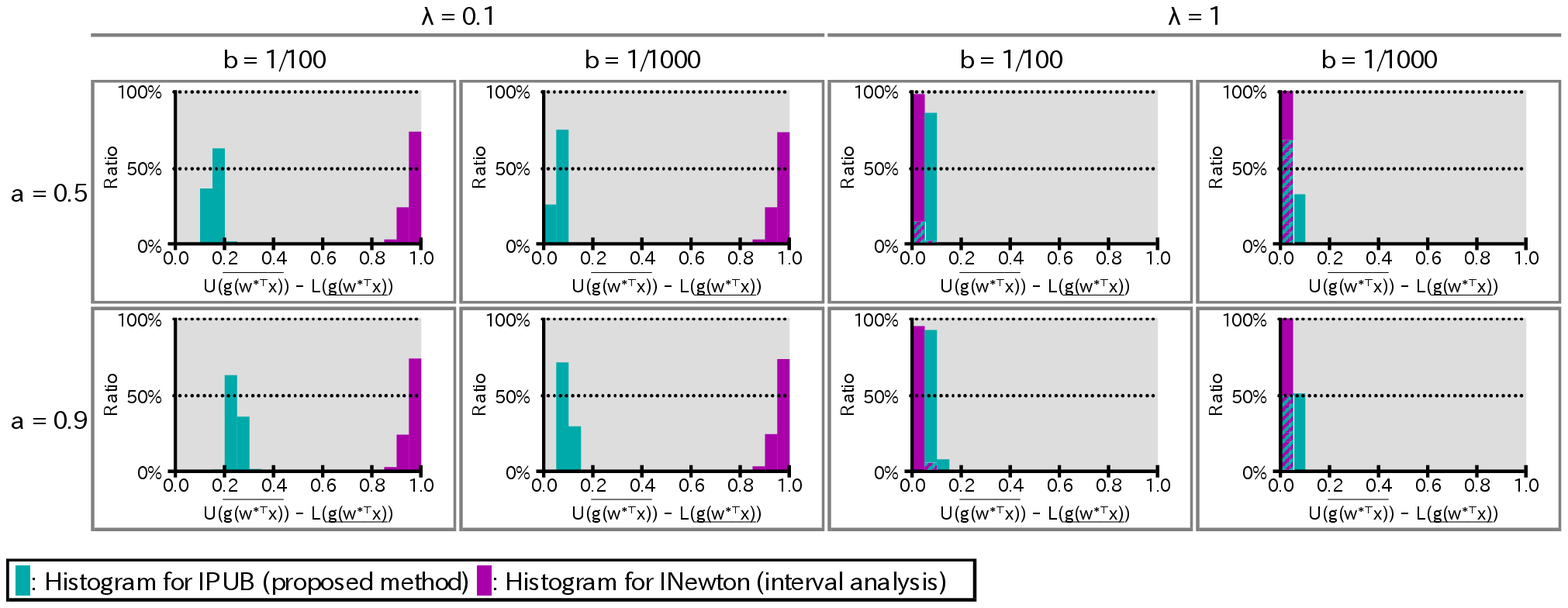}\\
(a) {\tt default} dataset \\
\includegraphics[width=\hsize,clip]{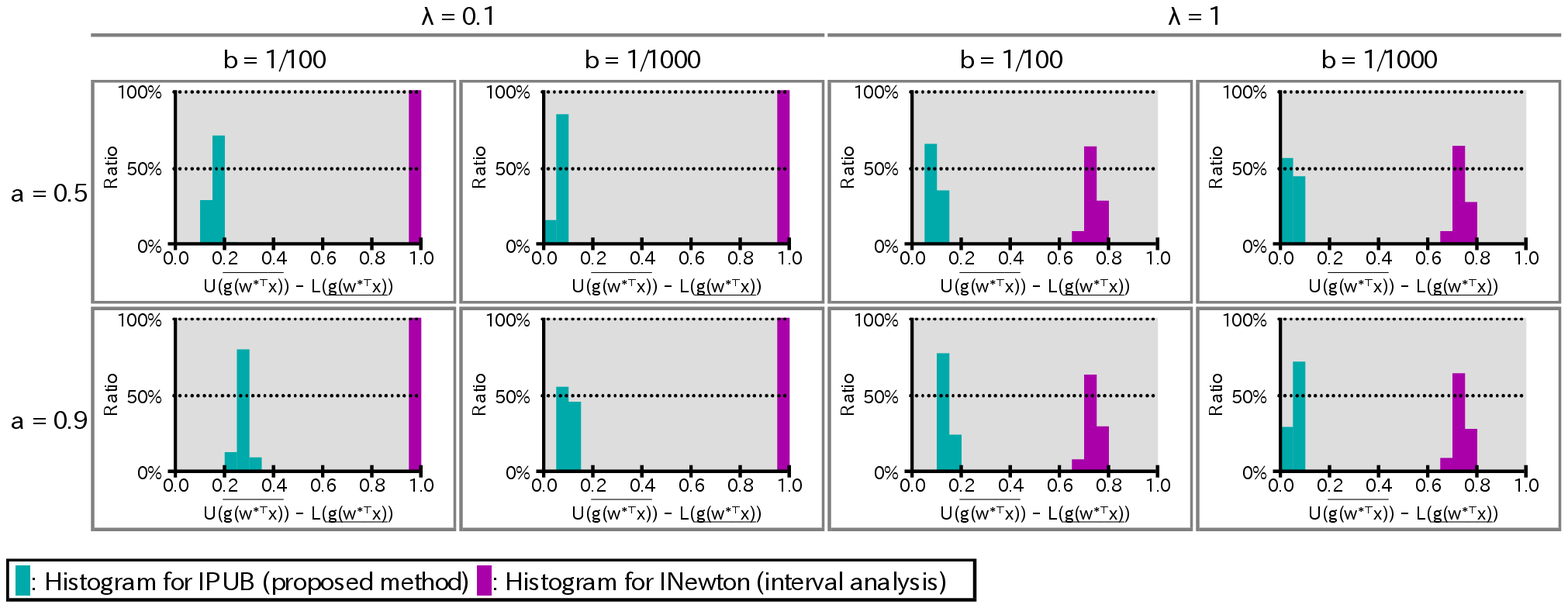} \\
(b) {\tt drive} dataset \\
\includegraphics[width=\hsize,clip]{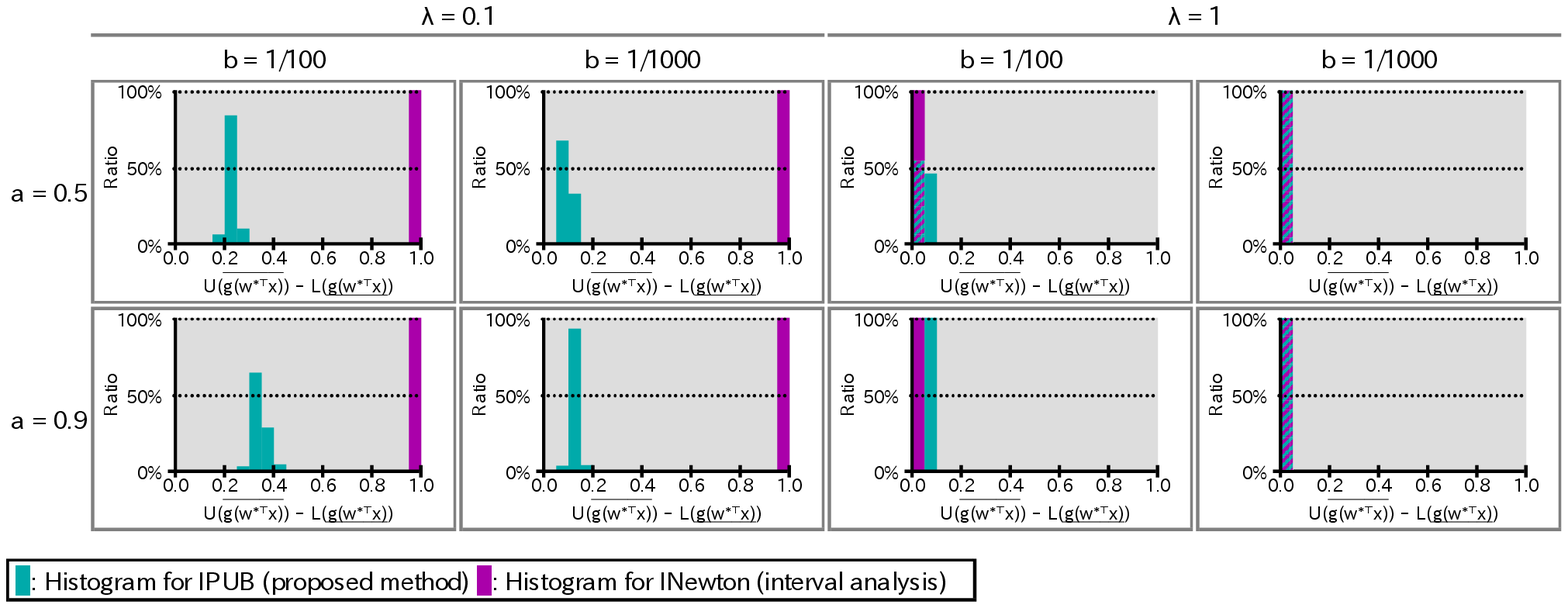} \\
(c) {\tt HIGGS} dataset \\
\caption{
Normalized histograms of prediction uncertainty interval lengths obtained using the IPUB method (green) and the INewton method (purple) for test inputs in three datasets. 
Since both methods guarantee that the prediction values are within the intervals, shorter interval lengths indicate better performance. 
%
%In most cases, the prediction uncertainty interval lengths for the IPUB method are much smaller than those for the INewton method. 
%
%The performance of the INewton method is particularly poor when $\lambda=0.1$, where a larger number of Newton steps was required, while the IPUB method still performed well. 
}
\label{fig:uncertainty-histogram}
\end{center}
\end{figure*}

\paragraph{Computation Time}
Table~\ref{tb:times} lists the ratios of the IPUB to INewton computation times.
%
%The numbers in the table indicates the ratio of the computation times of the IPUB method to those of the INewton method. 
%
In all cases, the IPUB method is more than 100 times faster than the INewton method. 

\begin{table}[t]
%\caption{The ratio of computation times of the IPUB method to those of the INewton method. ``Only $\Delta$'' describes the time to compute $\Delta$ in Theorems \ref{th:interval-primal-solution} and \ref{th:interval-primal-computation}. ``+Training'' describes the time to compute $\tilde{\bm{w}}^*$ and $\tilde{\bm{\alpha}}^*$ in Theorem \ref{th:interval-primal-solution}.}
\caption{The ratio of computation times of the IPUB method to those of the INewton method.
% In all cases, the IPUB method is more than 100 times faster than the INweton method.
}
\label{tb:times}
\begin{center}
{\tabcolsep=1mm {\small
\begin{tabular}{lc|cc|cc}
\hline
& & \multicolumn{2}{c}{$\lambda=0.1$} & \multicolumn{2}{c}{$\lambda=1$} \\
\multicolumn{1}{c}{Dataset} & $a$ & $b=1/100$ & $b=1/1000$ & $b=1/100$ & $b=1/1000$ \\
\hline
{\tt default}        & $0.5$ & 5.16e$-$03 & 5.39e$-$03 & 3.41e$-$03 & 2.40e$-$03 \\
($n=$27,000, $d=$28) & $0.9$ & 4.96e$-$03 & 5.63e$-$03 & 3.27e$-$03 & 2.64e$-$03 \\
\hline
{\tt drive}          & $0.5$ & 2.60e$-$03 & 2.67e$-$03 & 1.33e$-$03 & 1.67e$-$03 \\
($n=$52,658, $d=$48) & $0.9$ & 2.80e$-$03 & 2.89e$-$03 & 1.32e$-$03 & 1.48e$-$03 \\
\hline
{\tt HIGGS}             & $0.5$ & 4.74e$-$03 & 5.10e$-$03 & 1.02e$-$03 & 9.34e$-$04 \\
($n=$1,100,000, $d=$28) & $0.9$ & 4.70e$-$03 & 5.06e$-$03 & 9.68e$-$04 & 8.61e$-$04 \\
\hline
\end{tabular}
} }
\end{center}
\end{table}

\section{Conclusion}
In this paper, we proposed a new method for evaluating prediction uncertainty for learning with missing values. 
The proposed IPUB method is developed based on a novel sensitivity analysis technique for convex optimization problems, and can be applied to a wide class of commonly used machine learning algorithms. 
The IPUB method is associated with less prediction uncertainty and lower computational cost than existing methods.

\section*{Acknowledgment}

This work was partially supported by MEXT KAKENHI (17H00758, 16H06538), JST CREST (JPMJCR1302, JPMJCR1502), RIKEN Center for Advanced Intelligence Project, and JST support program for starting up innovation-hub on materials research by information integration initiative. 

\bibliography{paper-interval}
\bibliographystyle{ieeetr}
\clearpage
\appendix
\section*{Appendix}
% % % % % % % % % % % % % % % % % % % % % % % % % % % % % %
\section{Proofs} \label{sec:proof}
% % % % % % % % % % % % % % % % % % % % % % % % % % % % % %

We first present a lemma for strongly convex functions
that can be used for finding a set of models
when the data values are changed in the interval.
These proofs are mainly based on convex optimization theory.
%in which the optimal model exists
%even when we do not know the optimal solution itself. 
%
%Recently,
%in machine learning community,
%Lemma~\ref{lm:strong-convexity-sphere} 
%is used for the techniques
%called
%\emph{safe screening}
%and
%related problems
%
%Safe screening is studied
%in the context of sparse learning.
%
%It enables us to identify \emph{non-active} features i.e., features whose corresponding coefficients are zero, before the optimal solution is obtained.

\begin{lemma} \label{lm:strong-convexity-sphere}
Let $f(\bm{v})$: $S\to\mathbb{R}$ ($S\subseteq\mathbb{R}^k$: a convex set) be $\lambda$-strongly convex. Let $\bm{v}^* := \argmin_{\bm{v}\in S} f(\bm{v})$ be the optimal solution of the minimization problem.
Then, for any $\bar{\bm{v}}\in S$, $\|\bm{v}^* - \bar{\bm{v}}\|_2\leq\sqrt{(2/\lambda)(f(\bar{\bm{v}}) - f(\bm{v}^*))}$ holds.
\end{lemma}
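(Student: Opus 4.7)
\textbf{Proof proposal for Lemma \ref{lm:strong-convexity-sphere}.}

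The plan is to combine $\lambda$-strong convexity of $f$ with the first-order optimality condition at the constrained minimizer $\bm{v}^*$. First I would invoke the strong convexity inequality (from Definition \ref{df:strongly-convex-decomposable penalty}, applied to $f$ on $S$) at the pair of points $\bm{v}^*, \bar{\bm{v}}$, written in the form
\begin{equation*}
f(\bar{\bm{v}}) \;\geq\; f(\bm{v}^*) + \bm{g}^\top(\bar{\bm{v}} - \bm{v}^*) + \frac{\lambda}{2}\|\bar{\bm{v}} - \bm{v}^*\|_2^2,
\end{equation*}
where $\bm{g}$ is a subgradient of $f$ at $\bm{v}^*$ (if $f$ is differentiable this is simply $\nabla f(\bm{v}^*)$; otherwise one picks any element of $\partial f(\bm{v}^*)$, which is nonempty in the interior of $S$ and can be handled on the boundary via the standard subdifferential calculus).

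Next I would eliminate the first-order term. Since $\bm{v}^* = \argmin_{\bm{v}\in S} f(\bm{v})$ and $S$ is convex, the standard first-order optimality condition for constrained convex minimization yields the existence of a subgradient $\bm{g}\in\partial f(\bm{v}^*)$ such that $\bm{g}^\top(\bm{v} - \bm{v}^*) \geq 0$ for every $\bm{v}\in S$. Applied with $\bm{v} = \bar{\bm{v}}$, this gives $\bm{g}^\top(\bar{\bm{v}} - \bm{v}^*) \geq 0$. Dropping this nonnegative term from the inequality above yields
\begin{equation*}
f(\bar{\bm{v}}) - f(\bm{v}^*) \;\geq\; \frac{\lambda}{2}\|\bar{\bm{v}} - \bm{v}^*\|_2^2.
\end{equation*}

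Finally, since $f(\bar{\bm{v}}) - f(\bm{v}^*) \geq 0$ by optimality of $\bm{v}^*$, the right-hand side of the desired bound is well-defined, and rearranging gives $\|\bar{\bm{v}} - \bm{v}^*\|_2 \leq \sqrt{(2/\lambda)(f(\bar{\bm{v}}) - f(\bm{v}^*))}$, as required. The only subtle step is selecting the subgradient that certifies constrained optimality; in the intended application $f$ is the primal objective $P_{X^{\prime\prime}}$, which is differentiable in $\bm{w}$ for the losses in Table~\ref{tb:example-loss-penalty} combined with a decomposable strongly convex penalty (or else any valid subgradient satisfies the optimality certificate), so no real obstacle arises here.
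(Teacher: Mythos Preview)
Your proposal is correct and follows essentially the same argument as the paper: invoke the $\lambda$-strong convexity inequality at $\bm v^*$ and $\bar{\bm v}$, then drop the first-order term using the constrained optimality condition $\bm g^\top(\bar{\bm v}-\bm v^*)\ge 0$ (which the paper cites as B.24 in \cite{bertsekas1999nonlinear}), and finally rearrange. Your extra care in noting that one must select a subgradient witnessing the optimality certificate is, if anything, slightly more precise than the paper's notation, which treats $\partial f(\bm v^*)$ as a single element.
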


\begin{proof}
The lemma is proved from the fact that $\partial f(\bm{v}^*)^\top (\bar{\bm{v}} - \bm{v}^*)\geq 0$ holds for any subdifferentiable convex function $f: S\to\mathbb{R}$ for any convex domain $S$ (see B.24 in \cite{bertsekas1999nonlinear} for the proof). Then we have
\begin{align*}
& f(\bar{\bm{v}}) - f(\bm{v}^*) \\
\geq& \partial f(\bm{v}^*)^\top (\bar{\bm{v}} - \bm{v}^*) + (\lambda/2)\|\bar{\bm{v}} - \bm{v}^*\|_2^2
	\quad\text{(strong convexity)}\\
\geq& (\lambda/2)\|\bar{\bm{v}} - \bm{v}^*\|_2^2.
\end{align*}
\end{proof}

\begin{proof}[Proof of Theorem \ref{th:interval-primal-solution}]
For any $X^{\prime\prime}\in[\underline{X}, \overline{X}]$,
let $\bm{w}^{\prime\prime} := \argmin_{\bm{w}} P_{X^{\prime\prime}}(\bm{w})$
and $\bm{\alpha}^{\prime\prime} := \argmin_{\bm{\alpha}} D_{X^{\prime\prime}}(\bm{\alpha})$.
Noting that $P_{X^{\prime\prime}}$ is $\lambda$-strongly convex if $\rho$ is $\lambda$-strongly convex,
we have
\begin{align*}
& \|\bm{w}^{\prime\prime} - \bm{w}^\prime \|_2 \\
\leq & \sqrt{2 [ P_{X^{\prime\prime}}(\bm{w}^\prime) - P_{X^{\prime\prime}}(\bm{w}^{\prime\prime}) ]/\lambda}
	\quad(\because~\text{Lemma \ref{lm:strong-convexity-sphere}}) \\
= & \sqrt{2 [ P_{X^{\prime\prime}}(\bm{w}^\prime) - D_{X^{\prime\prime}}(\bm{\alpha}^{\prime\prime}) ]/\lambda}
	\quad(\because~\text{\eqref{ex:strong-duality}}) \\
\leq & \sqrt{2 [P_{X^{\prime\prime}}(\bm{w}^\prime) - D_{X^{\prime\prime}}(\bm{\alpha}^{\prime})] / \lambda},
	\quad(\because~\text{$\bm{\alpha}^{\prime\prime}$ is a maximizer of $D_{X^{\prime\prime}}$}).
%& \text{where}~ G_{X^{\prime\prime}}(\bm{w}^\prime, \bm{\alpha}^\prime) := P_{X^{\prime\prime}}(\bm{w}^\prime) - D_{X^{\prime\prime}}(\bm{\alpha}^{\prime}).
\end{align*}
%$G_{X^{\prime\prime}}(\bm{w}^\prime, \bm{\alpha}^\prime)$ is called the {\em duality gap}:
%this is known as a measure how $\bm{w}^{\prime\prime}$ and $\bm{w}^\prime$ are close.
%In fact, because of \eqref{ex:strong-duality}, the duality gap is zero when $X^{\prime\prime} = X^\prime$.
Therefore we can build a superset of $\cW^*$ as
\begin{align}
& {\cW}^* = \{ \bm{w}^{\prime\prime} \mid X^{\prime\prime}\in[\underline{X}, \overline{X}] \}
	\quad(\because \text{\eqref{eq:set_of_parameter}}) \nonumber \\
\subseteq & \bigcup_{X^{\prime\prime}\in[\underline{X}, \overline{X}]}
	\left\{ \bm{w} \mid \|\bm{w} - \bm{w}^\prime \|_2 \leq \sqrt{2 [P_{X^{\prime\prime}}(\bm{w}^\prime) - D_{X^{\prime\prime}}(\bm{\alpha}^{\prime})] / \lambda} \right\} \nonumber \\
= & \bigl\{ \bm{w} \mid \|\bm{w} - \bm{w}^\prime \|_2 \leq \max_{X^{\prime\prime}\in[\underline{X}, \overline{X}]} \sqrt{2 [P_{X^{\prime\prime}}(\bm{w}^\prime) - D_{X^{\prime\prime}}(\bm{\alpha}^{\prime})] / \lambda} \bigr\} \label{eq:bound-spot2region} \\
& (\text{the union of concentric hyperspheres}) \nonumber \\
\subseteq & \{ \bm{w} \mid \|\bm{w} - \bm{w}^\prime \|_2 \leq \sqrt{2 \Delta / \lambda} \} =: \cW. \label{eq:bound-spot2region-separated}
\end{align}
Thus we know that $\cW$ in \eqref{eq:cW} is a superset of $\cW^*$.
Note that the relationship between \eqref{eq:bound-spot2region} and \eqref{eq:bound-spot2region-separated}
can be proved from the following fact:
\begin{align}
& \max_{X^{\prime\prime}\in[\underline{X}, \overline{X}]} [P_{X^{\prime\prime}}(\bm{w}^\prime) - D_{X^{\prime\prime}}(\bm{\alpha}^{\prime})] \nonumber \\
\leq & \max_{X^{\prime\prime}\in[\underline{X}, \overline{X}]} P_{X^{\prime\prime}}(\bm{w}^\prime) - \min_{X^{\prime\prime}\in[\underline{X}, \overline{X}]} D_{X^{\prime\prime}}(\bm{\alpha}^{\prime}) =: \Delta. \nonumber
\end{align}
\end{proof}

\begin{proof}[Proof of Theorem \ref{th:interval-primal-computation}]
In the setting of the theorem, the following \eqref{ex:max-primal-in-interval} and \eqref{ex:max-dual-in-interval} hold.
Then we have \eqref{eq:duality-gap-upper-bound-limited} by $P_{X^\prime}(\bm{w}^\prime) = D_{X^\prime}(\bm{\alpha}^\prime)$ ($\because$ \eqref{ex:strong-duality}) and Theorem \ref{th:interval-primal-solution}.
\begin{align}
& \max_{X^{\prime\prime}\in[\underline{X}, \overline{X}]} P_{X^{\prime\prime}}(\bm{w}^\prime) - P_{X^\prime}(\bm{w}^\prime) \nonumber\\
= & \frac{1}{n} \sum_{i\in\cI} [\max\{ \ell(y_i, p^-_i), \ell(y_i, p^+_i) \} - \ell(y_i, \bm{w}^{\prime\top}\bm{x}^\prime_{i\cdot})]. \label{ex:max-primal-in-interval}\\
& \min_{X^{\prime\prime}\in[\underline{X}, \overline{X}]} D_{X^{\prime\prime}}(\bm{\alpha}^\prime) - D_{X^\prime}(\bm{\alpha}^\prime) \nonumber\\
= & \sum_{j\in\cJ} \Bigl[ \rho^*_j\left(\frac{1}{n}\bm{\alpha}^{\prime\top}\bm{x}^\prime_{\cdot j}\right) - \max\Bigl\{ \rho^*_j\Bigl(\frac{1}{n} q^-_j \Bigr), \rho^*_j\Bigl(\frac{1}{n} q^+_j \Bigr) \Bigr\} \Bigr]. \label{ex:max-dual-in-interval}
\end{align}

We only present the proof of \eqref{ex:max-primal-in-interval} here;
the counterpart proof for \eqref{ex:max-dual-in-interval} can be similarly derived.
\begin{align}
& \max_{X^{\prime\prime}\in[\underline{X}, \overline{X}]} P_{X^{\prime\prime}}(\bm{w}^\prime) - P_{X^\prime}(\bm{w}^\prime) \nonumber\\
=& \max_{X^{\prime\prime}\in[\underline{X}, \overline{X}]} \frac{1}{n} \sum_{i\in[n]} \ell(y_i, \bm{w}^{\prime\top}\bm{x}^{\prime\prime}_{i\cdot})
	- \frac{1}{n} \sum_{i\in[n]} \ell(y_i, \bm{w}^{\prime\top}\bm{x}^\prime_{i\cdot}) \nonumber\\
=& \frac{1}{n} \sum_{i\in[n]} \max_{\bm{x}^{\prime\prime}_{i\cdot}\in[\underline{\bm{x}_{i\cdot}}, \overline{\bm{x}_{i\cdot}}]} \ell(y_i, \bm{w}^{\prime\top}\bm{x}^{\prime\prime}_{i\cdot})
	- \frac{1}{n} \sum_{i\in[n]} \ell(y_i, \bm{w}^{\prime\top}\bm{x}^\prime_{i\cdot}) \label{eq:bound-primal-decomposed-raw}\\
=& \frac{1}{n} \sum_{i\in\cI} \max_{\bm{x}^{\prime\prime}_{i\cdot}\in[\underline{\bm{x}_{i\cdot}}, \overline{\bm{x}_{i\cdot}}]} \ell(y_i, \bm{w}^{\prime\top}\bm{x}^{\prime\prime}_{i\cdot})
	- \frac{1}{n} \sum_{i\in\cI} \ell(y_i, \bm{w}^{\prime\top}\bm{x}^\prime_{i\cdot})
\label{eq:bound-primal-decomposed} \\
=& \frac{1}{n} \sum_{i\in\cI} \max\{ \ell(y_i, p^-_i), \ell(y_i, p^+_i) \}
	- \frac{1}{n} \sum_{i\in\cI} \ell(y_i, \bm{w}^{\prime\top}\bm{x}^\prime_{i\cdot}).
\label{eq:bound-primal-decomposed-maximized}
\end{align}
The expression \eqref{eq:bound-primal-decomposed-raw} is rewritten as \eqref{eq:bound-primal-decomposed} (fewer terms in the summation) because, if $i\not\in\cI$, then $\underline{\bm{x}_{i\cdot}} = \overline{\bm{x}_{i\cdot}}$ and therefore $\bm{x}^{\prime\prime}_{i\cdot} = \bm{x}^\prime_{i\cdot}$.

The first term in \eqref{eq:bound-primal-decomposed} can be computed as that in \eqref{eq:bound-primal-decomposed-maximized} since $\ell(y, t)$ is a one-variable convex function with respect to $t$; it is maximized at either end of the domain of $t$, that is,
\begin{align*}
t = p^-_i = \min_{\bm{x}^{\prime\prime}_{i\cdot}\in[\underline{\bm{x}_{i\cdot}}, \overline{\bm{x}_{i\cdot}}]} \bm{w}^{\prime\top}\bm{x}^{\prime\prime}_{i\cdot} \quad
\text{or}
\quad
t = p^+_i = \max_{\bm{x}^{\prime\prime}_{i\cdot}\in[\underline{\bm{x}_{i\cdot}}, \overline{\bm{x}_{i\cdot}}]} \bm{w}^{\prime\top}\bm{x}^{\prime\prime}_{i\cdot}.
\end{align*}
The fact that $p^-_i = \min_{\bm{x}^{\prime\prime}_{i\cdot}\in[\underline{\bm{x}_{i\cdot}}, \overline{\bm{x}_{i\cdot}}]} \bm{w}^{\prime\top}\bm{x}^{\prime\prime}_{i\cdot}$ is proved as follows.
$p^+_i$ is similarly proved.
\begin{align}
& \min_{\bm{x}^{\prime\prime}_{i\cdot}\in[\underline{\bm{x}_{i\cdot}}, \overline{\bm{x}_{i\cdot}}]} \bm{w}^{\prime\top}\bm{x}^{\prime\prime}_{i\cdot}
	= \sum_{j\in[d]}(w^\prime_j > 0 ~?~ w^\prime_j \underline{x_{hj}} : w^\prime_j \overline{x_{hj}})\nonumber\\
= & \bm{w}^{\prime\top}\bm{x}^\prime_{i\cdot} + \sum_{j\in[d]}[(w^\prime_j > 0 ~?~ w^\prime_j \underline{x_{hj}} : w^\prime_j \overline{x_{hj}}) - w^\prime_j x^\prime_{ij}] \label{eq:loss-lower-bound-decomposed}\\
= & \bm{w}^{\prime\top}\bm{x}^\prime_{i\cdot} + \sum_{j\in\cV\cJ(i)}[(w^\prime_j > 0 ~?~ w^\prime_j \underline{x_{hj}} : w^\prime_j \overline{x_{hj}}) - w^\prime_j x^\prime_{ij}] \label{eq:loss-lower-bound-decomposed-reduced} \\
= & p^-_i. \nonumber
\end{align}
The expression \eqref{eq:loss-lower-bound-decomposed} is rewritten as
\eqref{eq:loss-lower-bound-decomposed-reduced} following a similar approach to \eqref{eq:bound-primal-decomposed-raw} and \eqref{eq:bound-primal-decomposed} above.
\end{proof}

\begin{proof}[Proof of Corollary \ref{co:bound-inner-product}]
Because we assume $g$ is monotonically non-decreasing,
we only have to prove
\begin{align}
\inf_{\bm{w}\in\cW}\bm{w}^\top\bm{x} = \bm{w}^{\prime\top}\bm{x} - \|\bm{x}\|_2 \sqrt{2 \Delta / \lambda},
	\label{ex:inner-product-inf} \\
\sup_{\bm{w}\in\cW}\bm{w}^\top\bm{x} = \bm{w}^{\prime\top}\bm{x} + \|\bm{x}\|_2 \sqrt{2 \Delta / \lambda}.
	\label{ex:inner-product-sup}
\end{align}
\eqref{ex:inner-product-inf} is proved as follows.
\eqref{ex:inner-product-sup} is similarly proved.
\begin{align*}
&\inf_{\bm{w}\in\cW}\bm{w}^\top\bm{x}
	= \inf_{\bm{w}: \|\bm w - \bm w^\prime\|_2 \le \sqrt{2 \Delta / \lambda}}\bm{w}^\top\bm{x} \\
=& \inf_{\bm{u}: \|\bm u\|_2 \le \sqrt{2 \Delta / \lambda}}(\bm{w}^\prime + \bm{u})^\top\bm{x}
	= \bm{w}^{\prime\top}\bm{x} + \inf_{\bm{u}: \|\bm u\|_2 \le \sqrt{2 \Delta / \lambda}}\bm{u}^\top\bm{x}.
\end{align*}
Because the second term in the last expression is an inner product,
it is minimized when $\|\bm{u}\|_2 = \sqrt{2 \Delta / \lambda}$ and
$\bm{u}$ is directed opposite to $\bm{x}$,
that is, $\bm{u} = -\frac{\bm{x}}{\|\bm{x}\|_2} \sqrt{2 \Delta / \lambda}$
and thus $\bm{u}^\top\bm{x} = - \|\bm{x}\|_2 \sqrt{2 \Delta / \lambda}$.
Therefore, we have \eqref{ex:inner-product-inf}.
\end{proof}

% % % % % % % % % % % % % % % % % % % % % % % % % % % % % %
\section{Preprocesses of the datasets for experiments} \label{ch:data-setups}
% % % % % % % % % % % % % % % % % % % % % % % % % % % % % %

How to prepare datasets in Section \ref{sec:experiment} is as follows:
\begin{description}
\item[Separation of datasets into training and test sets]
	For each of {\tt default} and {\tt drive} datasets, we took 10\% of instances at random
	as the test set, and the rest as the training set.
	For {\tt HIGGS} dataset, which has 11,000,000 instances in total,
	we took 0.1\% as the test set and 10\% as the training set without overlaps at random.
\item[Outlier removal] For each diversed numerical feature $j$, let $\pi^{(j)}_{0.5}$ be its
	0.5-percentile of the values in the feature of the training dataset.
	Then we replaced all values in feature $j$ of both the training and the test datasets
	smaller than $\pi^{(j)}_{0.5}$ with $\pi^{(j)}_{0.5}$.
	We did the similar for the values in feature $j$ larger than $\pi^{(j)}_{99.5}$
	(99.5-percentile).
\item[Normalization] For each feature, we normalized the values via a linear transformation
	so that the smallest and the largest values in the training dataset are 0 and 1, respectively.
\end{description}

%--------------------------------------- 

\end{document}